
\documentclass[letterpaper, 10 pt, conference]{ieeeconf}  

\IEEEoverridecommandlockouts                              

\overrideIEEEmargins                                      

\pdfminorversion=4



\newcommand\moduleName[1]{\textsf{#1}\xspace}
\newcommand{\A}{\moduleName{A*}}
\newcommand{\wA}{\moduleName{wA*}}
\newcommand{\ARA}{\moduleName{ARA*}}
\newcommand{\ANA}{\moduleName{ANA*}}
\newcommand{\MHA}{\moduleName{MHA*}}
\newcommand{\AMHA}{\moduleName{A-MHA*}}
\newcommand{\MRA}{\moduleName{MRA*}}
\newcommand{\AMRA}{\moduleName{AMRA*}}
\newcommand{\RRT}{\moduleName{RRT*}}

\title{\LARGE \bf
\AMRA: Anytime Multi-Resolution Multi-Heuristic A*
}

\author{Dhruv Mauria Saxena, Tushar Kusnur, and Maxim Likhachev
\thanks{The authors are with the Robotics Institute, Carnegie Mellon University, Pittsburgh, PA 15213, USA. {\small e-mail: \tt \{dsaxena, tkusnur, mlikhach\}@andrew.cmu.edu}. This work was sponsored by Mitsubishi Heavy Industries, Ltd.}%
}

\usepackage{xspace}
\usepackage{hyperref}

\usepackage{graphicx}
\usepackage{amsmath}
\usepackage{amssymb}

\usepackage{siunitx}
\usepackage[normalem]{ulem}

\usepackage{algorithm}
\usepackage[noend]{algpseudocode}
\let\oldReturn\Return
\renewcommand{\Return}{\State\oldReturn}

\usepackage{multirow}
\usepackage{booktabs}

\usepackage{geometry}
\geometry{
left=19.1mm,
top=19.1mm,
right=19.1mm,
bottom=21mm,
}

\usepackage[table,xcdraw,dvipsnames,x11names]{xcolor}
\definecolor{Red}{RGB}{255,0,0}
\definecolor{Green}{RGB}{0,128,0}
\definecolor{Blue}{RGB}{0,0,255}

\DeclareMathOperator*{\argmin}{arg\,min}

\newtheorem{theorem}{Theorem}

\usepackage{siunitx}
\usepackage{balance}

\begin{document}

\maketitle
\thispagestyle{empty}
\pagestyle{empty}

\begin{abstract}

Heuristic search-based motion planning algorithms typically discretise the search space in order to solve the shortest path problem.
Their performance is closely related to this discretisation.
A fine discretisation allows for better approximations of the continuous search space, but makes the search for a solution more computationally costly. A coarser resolution might allow the algorithms to find solutions quickly at the expense of quality.
For large state spaces, it can be beneficial to search for solutions across multiple resolutions even though defining the discretisations is challenging. The recently proposed algorithm Multi-Resolution A* (\MRA) searches over multiple resolutions. It traverses large areas of obstacle-free space and escapes local minima at a coarse resolution. It can also navigate so-called narrow passageways at a finer resolution. In this work, we develop \AMRA, an \textit{anytime} version of \MRA. \AMRA tries to find a solution quickly using the coarse resolution as much as possible. It then refines the solution by relying on the fine resolution to discover better paths that may not have been available at the coarse resolution. In addition to being anytime, \AMRA can also leverage information sharing between multiple heuristics. We prove that \AMRA is complete and optimal (in-the-limit of time) with respect to the finest resolution. We show its performance on 2D grid navigation and 4D kinodynamic planning problems.

\end{abstract}


\section{Introduction}\label{sec:intro}

\begin{figure}[t]
    \centering
    \includegraphics[width=0.8\columnwidth]{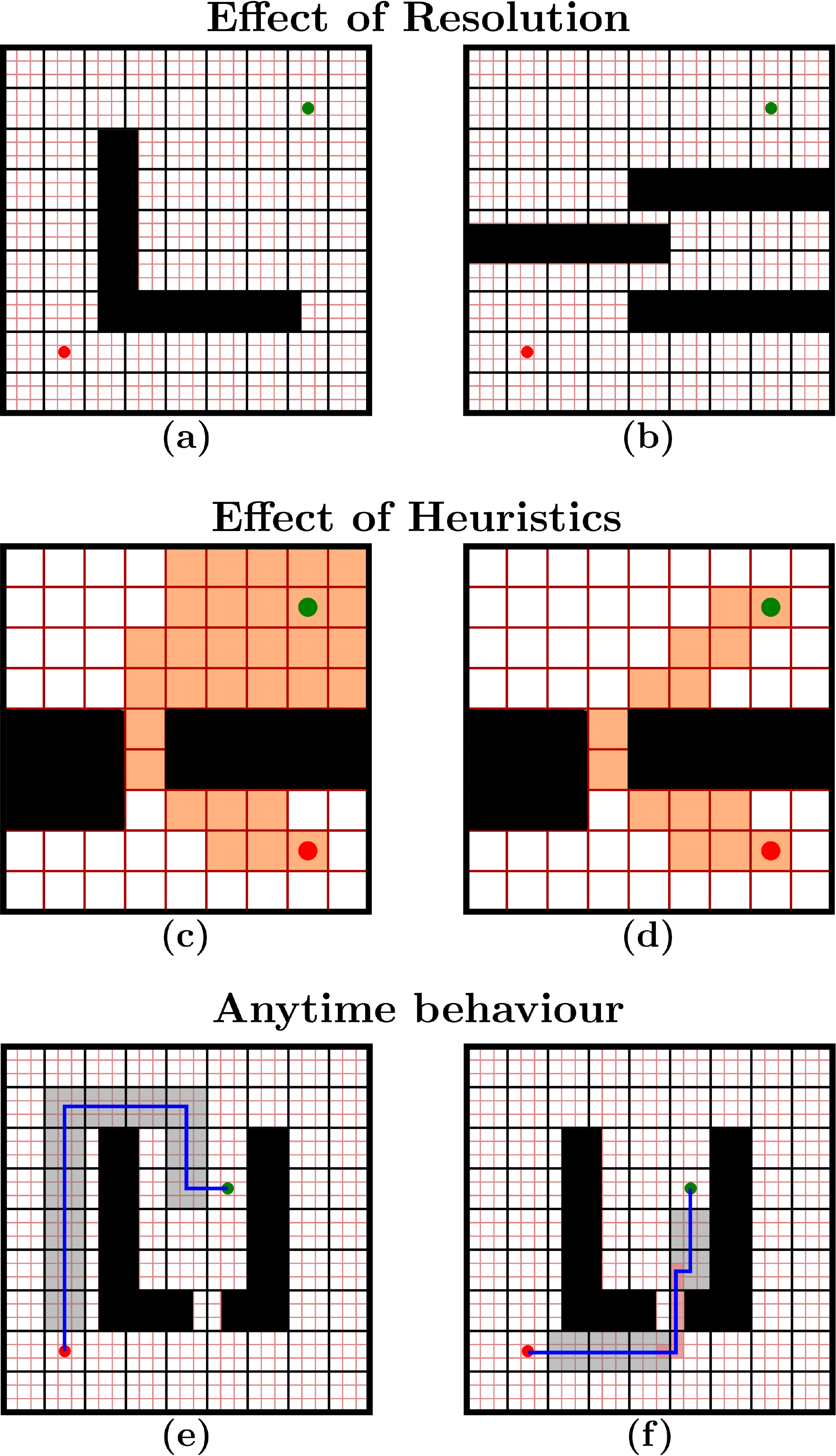}
    \caption{\textit{Effect of Resolution:} local minima can be explored quicker at coarser resolutions (a), while finer resolutions help navigate through narrow passageways (b); \textit{Effect of Heuristics:} a less informed heuristic (c) expands many more states than an informed heuristic (d); and \textit{Anytime behaviour}: an initial suboptimal solution (e) can be improved over time (f). The pictures show start states in \textcolor{Green}{green}, goal states in \textcolor{Red}{red}.}
    \label{fig:intro}
\end{figure}

Heuristic search algorithms for robot motion planning find least-cost solutions in discretised approximations of the continuous state space of the robot. They have shown impressive results in robot manipulation~\cite{CohenCL14}, navigation~\cite{LiuMAK18}, task planning~\cite{GarrettLK14}, and multi-robot coordination~\cite{WagnerC11}. The size of the search space for these algorithms is determined by the dimensionality of the robot state space and, crucially, the discretisation level of each of these dimensions~\cite{L2006}.
If the state space is discretised finely the search needs to explore a greater number of possible robot states in order to find a solution which is computationally costly. However at the same time, this higher resolution allows the search to find potential solutions through narrow passageways and dense obstacle clutter. A coarse discretisation of the state space is useful in relatively obstacle-free areas of the environment, and for the search to escape local minima where the heuristic estimate of the cost-to-goal is weakly correlated with the true cost-to-goal. The downside is that the search might fail to find a solution at that resolution.

At the same time, it is important for heuristic search algorithms to be instantiated with useful heuristics as they determine the computational effort spent exploring areas of the search space to find a solution. If a heuristic estimate of the cost-to-goal is poorly correlated with the true cost-to-goal, search algorithms can spend a lot of time expanding states in these local minima before finding a path to goal~\cite{Hoffmann01}. Multi-heuristic search algorithms~\cite{AineSNHL16} were developed to alleviate this problem by allowing search algorithms to be instantiated with multiple heuristics. These are not only easier to define for the practitioner, but also allow for information sharing between heuristics to better guide the search.

In this work we present \AMRA, an \textbf{A}nytime Multi-Heuristic \textbf{M}ulti-\textbf{R}esolution \textbf{\A} search algorithm that is capable of searching a state space at multiple levels of discretisation, share information between multiple heuristics, and improve the quality of the solution found over time. It is able to determine the appropriate resolution for exploring local minima and navigating across obstacle-free space and narrow passageways. It takes advantage of different heuristics being better correlated with the true cost-to-goal in different regions of the search space. Finally, with every iteration of the search loop, \AMRA is able to improve its solution with tighter suboptimality bounds and find the optimal solution in-the-limit of time.

Fig.~\ref{fig:intro} shows simple examples of three aspects of heuristic search that \AMRA encapsulates in one general algorithm while maintaining important theoretical properties of completeness and (sub-)optimality. First, in Fig.~\ref{fig:intro} (a), if we follow a greedy heuristic to the goal, the obstacle introduces a local minima with many more states at the fine resolution (light red grid) than the coarse resolution (black grid). In this case, running a search at coarse resolution will find a path to the goal with less computation. The downside of using only a coarse resolution is shown in Fig.~\ref{fig:intro} (b), where a solution only exists at the fine resolution. Fig.~\ref{fig:intro} (c-d) show the effect of using a less informed heuristic (Euclidean distance) vis-a-vis a perfect heuristic (backward Dijkstra search from the goal). For more complicated problems, different heuristics can be informative in different regions of the state space, and a search algorithm that can take advantage of this can greatly improve performance. Finally, an anytime algorithm like \AMRA relies heavily on the coarse resolution to quickly find an initial solution in Fig.~\ref{fig:intro} (e) (coarse states are highlighted in gray). It goes on to improve this solution over time to also include fine resolution states (highlighted in light red) in Fig.~\ref{fig:intro} (f).


\section{Related Work}\label{sec:litreview}
\AMRA is an anytime, multi-heuristic, multi-resolution search algorithm for solving robot motion planning problems. It builds on the family of best-first search algorithms that traces its roots back to classic \A and Weighted A* (\wA) search algorithms~\cite{HartNR68,Pohl70}. For a large class of real-world robotics applications, optimal motion planning can be intractable due to the expansive nature of robot state spaces. Anytime algorithms allow us to solve problems in these domains by finding an initial highly suboptimal solution quickly, and spending any remaining planning budget to improve that solution. \ARA~\cite{LikhachevGT03} is an anytime version of \wA and provides bounds on solution suboptimality that Anytime \A~\cite{ZhouH02} does not. van den Berg et. al~\cite{BergSHG11} present \ANA, a non-paramateric version of \ARA\footnote{van den Berg et. al~\cite{BergSHG11} also contain a more thorough list of anytime \A algorithms.}.

While anytime algorithms have the ability to refine solutions over time, their performance is determined by the heuristic. The use of multiple heuristics within a search algorithm can dramatically improve search performance since different heuristics can offer better guidance in different regions of state space~\cite{Helmert06,AineSNHL16}. Recently, Natarajan et. al.~\cite{NatarajanSALC19} have also developed an anytime multi-heuristic algorithm.

Contemporaneous to the development of anytime and multi-heuristic algorithms, there has been work on developing algorithms that utilise multiple levels of discretisation of the robot state space. These multi-resolution algorithms rely on a coarser discretisation to navigate large regions of obstacle-free space, and revert to a finer discretisation to maneuver through narrow passageways~\cite{MooreA95,GarciaKB14,DuIL20}.

Most of the algorithms discussed above have provable bounds on solution suboptimality. Some sampling-based planners for robot motion planning~\cite{KaramanF11} offer a different notion of solution optimality. They are \textit{asymptotically} optimal, and thus will find the optimal solution given infinite time. As such, they can be interrupted early to exhibit an anytime behaviour.

We compare the performance of \AMRA in this paper with the three most closely related heuristic search algorithms (\ARA~\cite{LikhachevGT03}, \MRA~\cite{DuIL20} and \AMHA~\cite{NatarajanSALC19}) and against an asymptotically optimal sampling-based algorithm (\RRT~\cite{KaramanF11}).


\section{Problem Formulation}\label{sec:problem}
We define a robot motion planning problem with the tuple $(\mathcal{X}, x_s, \mathcal{G})$, where $\mathcal{X}$ is the state space of the robot, $x_s \in \mathcal{X}$ is the start state, and $\mathcal{G} \subset \mathcal{X}$ is a set of goal states. $\mathcal{X}_{\text{free}} \subset \mathcal{X}$ denotes the obstacle-free space in the environment. A solution to the motion planning problem, if one exists, is a collision-free path from $x_s$ to $\mathcal{G}$.

We assume access to a cost function $c: \mathcal{X} \times \mathcal{X} \rightarrow \mathbb{R}_{\geq 0}$ to compute the cost of an action between two robot states. The cost of a potential solution path $\pi = \{x_1, \ldots, x_n\}$ is denoted by overloading the definition of cost function $c$ as $c(\pi) = \sum_{i = 1}^{N-1} c(x_i, x_{i+1})$. Our goal in this paper is to solve the least-cost path planning problem and find the optimal path $\pi^* = \argmin_\pi c(\pi)$.



\section{Graph Construction and Search}\label{sec:graph}
We solve the least-cost robot motion planning problem with a heuristic search algorithm over a graph $G = (V, E)$. The vertex set $V \subset \mathcal{X}$ contains robot states. Edges $e = (x_i, x_j) \in E$ connect two vertices $x_i, x_j \in V$ if the robot can execute an action that takes it from $x_i$ to $x_j$. Thus each edge $e \in E$ is also an action $a$ in the robot action space $\mathcal{A}$.

\subsection{Action Spaces}\label{sec:actions}

\AMRA constructs its vertex set $V = \bigcup_r V_r$ as a union over different levels of discretisation or resolutions $r$ of $\mathcal{X}$. Each vertex set $V_r$ has a corresponding edge set $E_r$ which make up the edges $E = \bigcup_r E_r$ used by \AMRA when constructing $G$. We represent each edge set $E_r$ with an action space $\mathcal{A}_r$ available to the robot. The core underlying assumption in this work and robot motion planning with multiple resolutions in general is that for every resolution $r$ being used, the robot has access to actions $\mathcal{A}_r$ that take it between two states $x_u, x_v \in V_r$. Note that this formulation allows for a state $x \in \mathcal{X}$ to exist at multiple resolutions $r$, and thus in multiple vertex and edge sets $V_r, E_r$.

\begin{figure}[t]
    \centering
    \includegraphics[width=0.25\columnwidth]{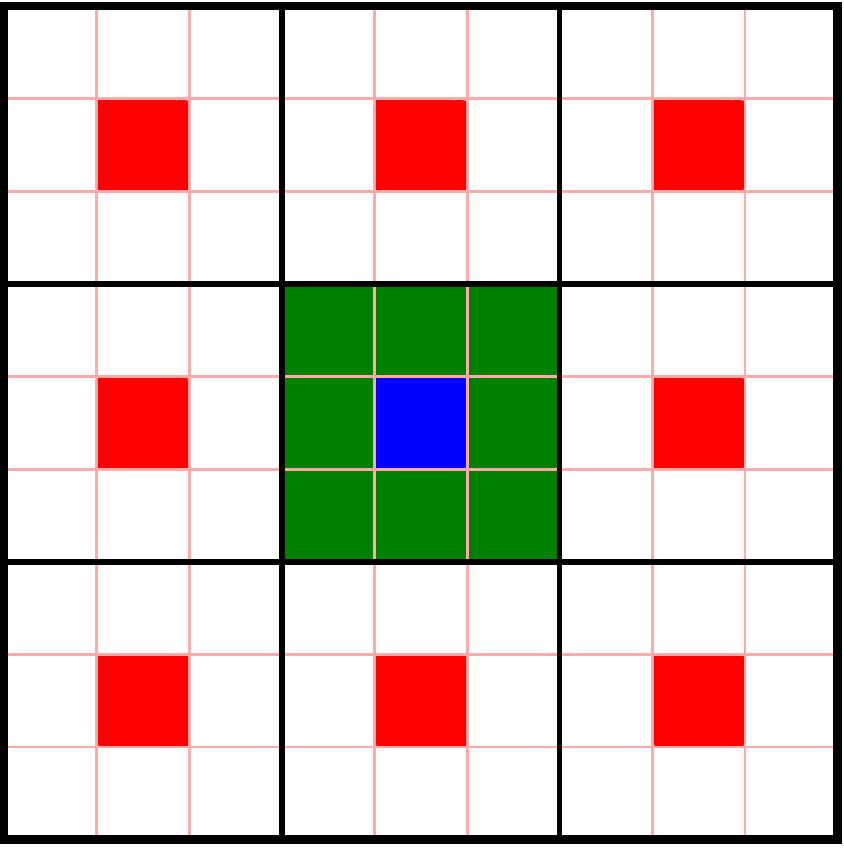}
    \caption{Multi-resolution action space for 8-connected grid navigation. The robot (at the \textcolor{Blue}{blue} state) can execute fine resolution actions to \textcolor{Green}{green} states, and coarse resolution actions to \textcolor{Red}{red} states.}
    \label{fig:actions}
\end{figure}

Fig.~\ref{fig:actions} shows an example of a multi-resolution action space for 2D grid navigation. \AMRA does not require that coarse actions be made up of fine actions, nor does it require any action to end in states that exist at multiple resolutions. However as we discuss in Section~\ref{sec:heuristics}, especially for multi-heuristic search, it can be useful to construct action spaces that lead to significant overlap between vertex sets at different resolutions.

\subsection{Multi-Heuristic Search}\label{sec:heuristics}
A heuristic function $h: V \rightarrow \mathbb{R}_{\geq 0}$ is an estimate of the \textit{cost-to-goal} from a state $x \in V$ on the graph $G$. Heuristics are \textit{admissible} if they under-estimate the true cost-to-goal from $x$ to $\mathcal{G}$ on $G$. \AMRA executes a multi-heuristic search derived from \MHA~\cite{AineSNHL16} which allows the use of multiple inadmissible heuristics. \AMRA parameterises heuristics by the resolution $r$ for which they are applicable. The search is initialised with a set of heuristics, at least one per resolution $r$ used. The \MHA framework allows us to use any number of additional heuristics for each resolution.

As with \MHA and \MRA, it is necessary to initialise \AMRA with an \textit{anchor} heuristic which is \textit{consistent}, i.e. a heuristic function $h$ such that $h(x_i) \leq h(x_j) + c(x_i, x_j) \forall e = (x_i, x_j) \in E$. We reserve the resolution $r = 0$ to refer to this anchor search. The anchor search uses the full action space of the robot $\mathcal{A}_0 = \bigcup_{r > 0} \mathcal{A}_r$ to construct the graph $G_0 = (V_0, E_0)$. This implies $V_0 = \bigcup_{r > 0} V_r$, and we refer to the anchor search vertex set as the \textit{union space}. If all coarse resolution states coincide with some state at the finest resolution ($V_r \subset V_1 \,\forall\, r > 1$), this is easily achieved by running the anchor search at the finest resolution ($V_0 \equiv V_1$).

Using multiple heuristics at the same resolution allows us to share information between these heuristics by maintaining a single \textit{cost-to-come} value (cost of the current best path between the start and some state) and multiple cost-to-goal estimates for a state. This information sharing allows the search to potentially escape local minima for some heuristic in a region of the state space on the basis of guidance from another heuristic at that resolution in that region.


\section{Algorithm}\label{sec:algo}

\begin{algorithm}[tpbh]
\begin{small}
\caption{\small{\AMRA}}\label{alg:amra}
\begin{algorithmic}[1]

\Procedure{Key}{$x, i$}
    \Return $g(x) + w_1 \times h_i(x)$
\EndProcedure

\Procedure{Expand}{$x, i$}
    \State $r \gets \texttt{Res}(i)$
    \If{$i \neq 0$}
        \ForAll{$j > 0$}\label{line:open_clear} \Comment{Loop over search queues}
            \If{$j\neq i \wedge \texttt{Res}(j) == r$}
                \State Remove $x$ from $OPEN_j$
            \EndIf
        \EndFor
    \EndIf
    \For{$x^\prime \in \texttt{Succs}(x, \mathcal{A}_r)$}\label{line:succs}
        \If{$g(x^\prime) > g(x)  + c(x, x^\prime)$}
            \State $g(x^\prime) \gets g(x)  + c(x, x^\prime)$
            \State $bp(x^\prime) \gets x$
            \If{$x^\prime \in CLOSED_0$}\label{line:incons}
                \State $\texttt{Insert}(x^\prime, INCONS)$
            \Else
                \State $\texttt{Update}(x^\prime, OPEN_0, \textsc{Key}(x^\prime, 0)$\label{line:succ_open}
                \ForAll{$j \in \{1, \ldots, M\}$}
                    \State $l \gets \texttt{Res}(j)$
                    \If{$l \notin \texttt{Resolutions}(x^\prime)$}
                        \State \textbf{continue}
                    \EndIf
                    \If{$x^\prime \notin CLOSED_l$}
                        \If{$\textsc{Key}(x^\prime, j) \leq w_2 \times \textsc{Key}(x^\prime, 0)$}
                            \State $\texttt{Update}(x^\prime, OPEN_j, \textsc{Key}(x^\prime, j))$\label{line:succ_inad}
                        \EndIf
                    \EndIf
                \EndFor
            \EndIf
        \EndIf
    \EndFor
\EndProcedure

\Procedure{ImprovePath}{ }
    \While{$OPEN_i$ is not empty $\forall i \in \{0, \ldots M\}$}\label{line:fail}
        \State $i \gets \texttt{ChooseQueue}()$\label{line:roundrobin} \Comment{over inadmissible queues}
        \If{$OPEN_i.\text{min}() \leq w_2 \times OPEN_0.\text{min}()$}
            \State $x \gets OPEN_i.\text{top}()$\label{line:expand_inad} \Comment{$OPEN_i.\text{top}()$ also pops}
            \State $\textsc{Expand}(x, i)$
            \State $r \gets \texttt{Res}(i)$
            \State $\texttt{Insert}(x, CLOSED_r)$\label{line:close_inad}
            \If{$x \in \mathcal{G}$}\label{line:goal_inad}
                \State $x_{\text{goal}} \gets x$
                \Return true\label{line:solve_inad}
            \EndIf
        \Else
            \State $x \gets OPEN_0.\text{top}()$\label{line:expand_anchor}
            \State $\textsc{Expand}(x, 0)$
            \State $\texttt{Insert}(x, CLOSED_0)$\label{line:close_anchor}
            \If{$x \in \mathcal{G}$}\label{line:goal_anchor}
                \State $x_{\text{goal}} \gets x$
                \Return true\label{line:solve_anchor}
            \EndIf
        \EndIf
    \EndWhile
\EndProcedure

\Procedure{Main}{$x_s, \mathcal{G}, \{\mathcal{A}_r\}_{r=0}^N, \{h_i\}_{i=0}^M, w_1^{\text{init}}, w_2^{\text{init}}$}
    \State $w_1 \gets w_1^{\text{init}}, w_2 \gets w_2^{\text{init}}$
    \State $g(x_s) = 0$
    \State $bp(x_s) \gets NULL$
    \ForAll{$i \in \{0, \ldots, M\}$}
        \State $OPEN_i.\text{clear}()$ \Comment{Initialise priority queues}
    \EndFor
    \State $\texttt{Insert}(x_s, INCONS)$
    \While{$w_1 \geq 1 \,\wedge\, w_2 \geq 1$}\label{line:amra_loop}
        \ForAll{$x \in INCONS$}\label{line:incons_to_anchor}
            \State $\texttt{Update}\left(x, OPEN_0, \textsc{Key}(x, 0)\right)$
        \EndFor
        \State $INCONS.\text{clear}()$
        \ForAll{$x \in OPEN_0$}\label{line:anchor_to_inad}
            \ForAll{$j \in \{1, \ldots, M\}$}
                \If{$\texttt{Res}(j) \in \texttt{Resolutions}(x)$}
                    \State $\texttt{Update}\left(x, OPEN_j, \textsc{Key}(x, j)\right)$
                \EndIf
            \EndFor
        \EndFor
        \ForAll{$r \in \{0, \ldots, N\}$}
            \State $CLOSED_r.\text{clear}()$
        \EndFor
        \If{$\textsc{ImprovePath}( )$}
            \State Publish current solution by tracing $bp(x_{\text{goal}})$ till $x_s$
        \EndIf
        \If{$w_1 == 1 \,\wedge\, w_2 == 1$}
            \State \textbf{break}
        \EndIf
        \State Update $w_1, w_2$ \label{line:w_update}
    \EndWhile
\EndProcedure
\end{algorithmic}
\end{small}
\end{algorithm}

Algorithm~\ref{alg:amra} contains the full \AMRA search procedure. \AMRA is initialised with the start state $x_s$, goal set $\mathcal{G}$, action spaces $\{\mathcal{A}_0, \ldots, \mathcal{A}_N\}$, and heuristics $\{h_0, \ldots, h_M\}$. The state space $\mathcal{X}$ is discretised into $N$ levels. For $i < j$, resolution $i$ is finer than resolution $j$. The anchor action space is the union action space ($\mathcal{A}_0 = \bigcup_{r=1}^N \mathcal{A}_r$), and the anchor search is run at the finest resolution ($V_0 \equiv V_1$). The anchor heuristic $h_0$ is consistent (and thus admissible), while the other heuristics may be inadmissible. There is at least one heuristic per resolution, thus $M \geq N$.

\subsection{Connections to Existing Algorithms}
\AMRA is a generalisation of several existing search algorithms. With a single heuristic per resolution, if we do not run \AMRA anytime, \AMRA is the same as \MRA~\cite{DuIL20}. We can also run \AMRA for a single resolution, with multiple heuristics at that resolution, and obtain either \AMHA~\cite{NatarajanSALC19} or \MHA~\cite{AineSNHL16} depending on whether it is run anytime or not. In slightly more contrived scenarios, for a single resolution and a single heuristic, \AMRA can also devolve to \ARA~\cite{LikhachevGT03} and Weighted A* (\wA)~\cite{Pohl70}. The connections stem from the fact that \AMRA utilises multiple resolutions, multiple heuristics, and is anytime.

\subsection{\AMRA Desiderata}

We denote the cost-to-come for a state with the function $g: V \rightarrow \mathbb{R}_{\geq 0}$. The \textit{parent} of a state $x$, denoted by $bp(x)$ is its predecessor on the best known path from $x_s$ to $x$. \texttt{Resolutions}$(x)$ returns the set of resolutions state $x$ lies on: $r \in \texttt{Resolutions}(x) \Rightarrow x \in V_r$. Each resolution $r$ is associated with a container for states expanded at that resolution, $CLOSED_r$. \texttt{Res}$(i)$ returns the resolution associated with heuristic $h_i$. Each heuristic $h_i$ is associated with a priority queue $OPEN_i$. \texttt{Succs}$(x, \mathcal{A}_r)$ generates all valid successors of $x$ at resolution $r$ using the appropriate action space $\mathcal{A}_r$. For $r = 0$, this generates all valid successors of $x$ for all resolutions in \texttt{Resolutions}$(x)$.

\subsection{Algorithmic Details}
The anytime nature of \AMRA is controlled by the loop in Line~\ref{line:amra_loop}. The suboptimality of the solution is controlled by parameters $w_1, w_2$.
At the end of each iteration, \AMRA returns a solution which is at most $w_1\times w_2$ suboptimal with respect to the graph $G_0 = (V_0, E_0)$ (from Theorem~\ref{thm:suboptimality}). $w_1, w_2$ are decreased in Line~\ref{line:w_update} in order to potentially improve the solution quality in the next iteration. To facilitate this, \AMRA maintains $INCONS$ - a container for all \textit{inconsistent} states. These are states whose cost-to-come, or $g$-value, is improved after they have been expanded from the admissible anchor search. If a state becomes inconsistent, a better solution through it might be found than the current best known solution. Hence these states are added back into the appropriate $OPEN_i$ for consideration by the search (Lines~\ref{line:incons_to_anchor} and~\ref{line:anchor_to_inad}).

\textsc{ImprovePath} is the core function that searches for a path between $x_s$ and $\mathcal{G}$. $x_{\text{goal}} \in \mathcal{G}$ is some state which satisfies the termination condition in Line~\ref{line:goal_inad} or Line~\ref{line:goal_anchor}. If no such $x_{\text{goal}}$ is found before all $OPEN_i$ are exhausted (Line~\ref{line:fail}), \AMRA terminates with failure. Line~\ref{line:roundrobin} controls the scheduling policy over all heuristics. While many options exist~\cite{PhillipsNAL15}, for \AMRA we use a simple round robin.

The core modification in \AMRA over \MRA and \MHA is the way in which the graph is constructed in \textsc{Expand}. Any time a state is expanded at a particular resolution, it is removed from all inadmissible (non-anchor) heuristics at that resolution (in the loop in Line~\ref{line:open_clear})\footnote{For the sake of simplicity, we refer to all non-anchor heuristics as `inadmissible'.}. This is because the $g$-value of an inadmissibly expanded state is independent of the heuristic it was expanded from. The \textsc{Expand} function generates the successors of state $x$ by using the appropriate action space $\mathcal{A}_r$ (Line~\ref{line:succs}). After checking for successor consistency in Line~\ref{line:incons}, a newly generated state is inserted at all appropriate resolutions (Lines~\ref{line:succ_open} to~\ref{line:succ_inad}).






\section{Theoretical Analysis}\label{sec:analysis}

\begin{theorem}\label{thm:expansions}
    \AMRA expands each state at most $N+1$ times per iteration.
\end{theorem}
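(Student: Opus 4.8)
The plan is to define one \emph{iteration} as a single pass through the outer \textsc{while} loop of \textsc{Main} (Line~\ref{line:amra_loop}), i.e.\ one invocation of \textsc{ImprovePath}. I would first record the structural fact that every $CLOSED_r$, $r \in \{0, \ldots, N\}$, is cleared exactly once at the top of an iteration (just before \textsc{ImprovePath} is called) and thereafter only ever grows during that iteration, since $CLOSED_r.\text{clear}()$ appears only in \textsc{Main} and never inside \textsc{ImprovePath} or \textsc{Expand}. Given this, I would reduce the theorem to the single claim that, within one iteration, each state $x$ is expanded at most once at each resolution $r$; summing over the $N+1$ resolutions $\{0, 1, \ldots, N\}$ then yields the bound $N+1$.

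For a non-anchor resolution $r > 0$, I would argue as follows. A state $x$ is expanded at resolution $r$ only when it is popped from some $OPEN_i$ with $\texttt{Res}(i) = r$ (Line~\ref{line:expand_inad}); immediately afterwards it is inserted into $CLOSED_r$ (Line~\ref{line:close_inad}), and the loop at Line~\ref{line:open_clear} removes it from every sibling queue $OPEN_j$ with $\texttt{Res}(j) = r$, $j \neq i$. The only place a state is (re-)inserted into a queue $OPEN_j$ with $\texttt{Res}(j) = r$ during \textsc{ImprovePath} is Line~\ref{line:succ_inad} of \textsc{Expand}, and that branch is guarded by the test $x' \notin CLOSED_{\texttt{Res}(j)}$. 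Hence once $x \in CLOSED_r$ it can never again enter any same-resolution queue, so it cannot be popped and expanded at resolution $r$ a second time. The crucial point to spell out is that this holds regardless of how many heuristics share resolution $r$: the removal at Line~\ref{line:open_clear} clears the sibling queues at the instant of expansion, and the $CLOSED_r$ guard blocks all future re-insertions, so the ``at most once'' bound is \emph{per resolution} rather than per heuristic.

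The anchor resolution $r = 0$ requires a separate argument because \textsc{Expand} does not run the removal loop when $i = 0$ and instead relies on $INCONS$. Here I would observe that a state enters $OPEN_0$ only through Line~\ref{line:succ_open} (during \textsc{ImprovePath}) or the batch transfer in \textsc{Main} preceding the iteration (Line~\ref{line:incons_to_anchor}). The branch containing Line~\ref{line:succ_open} is reached only when $x' \notin CLOSED_0$, because the complementary case ($x' \in CLOSED_0$) routes the newly-inconsistent state to $INCONS$ at Line~\ref{line:incons}, and $INCONS$ is never consulted by \textsc{ImprovePath}. Therefore, once $x$ has been expanded at the anchor and placed in $CLOSED_0$ (Line~\ref{line:close_anchor}), every subsequent re-generation of $x$ as an improved successor deposits it in $INCONS$ rather than $OPEN_0$, so it is never re-expanded at the anchor within the iteration.

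Combining the two cases gives at most one expansion at each of resolutions $0, 1, \ldots, N$, and hence at most $N+1$ expansions of any state per iteration; the bound is an upper bound since a state need not lie on every resolution (it is expandable only at resolutions in $\texttt{Resolutions}(x)$ plus the anchor). I expect the main obstacle to be the careful bookkeeping in the multi-heuristic-per-resolution setting: one must verify that the interaction between the explicit sibling-queue removal at Line~\ref{line:open_clear} and the $CLOSED_r$-gated re-insertion at Line~\ref{line:succ_inad} genuinely closes off every route back into a same-resolution queue, and that the anchor's distinct $INCONS$ mechanism delivers the same guarantee without that removal loop.
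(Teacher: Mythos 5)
Your proposal is correct and follows essentially the same route as the paper's own proof: an anchor case (once a state enters $CLOSED_0$, the guard at Line~\ref{line:incons} diverts all re-generations to $INCONS$, which \textsc{ImprovePath} never reads) plus a non-anchor case (the sibling-queue removal at Line~\ref{line:open_clear} together with the $CLOSED_r$ guard on Line~\ref{line:succ_inad} caps expansions at one per resolution), summing to $N+1$. Your version merely makes explicit some bookkeeping the paper leaves implicit, such as the fact that the $CLOSED_r$ sets are cleared only once per outer-loop iteration and that the bound is per resolution rather than per heuristic.
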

\begin{proof}
    Any state that is expanded must be in some $OPEN_i$ (Lines~\ref{line:expand_inad} and~\ref{line:expand_anchor}). Upon admissible expansion from the anchor search, the state is inserted into $CLOSED_0$ (Line~\ref{line:close_anchor}) and never inserted into $OPEN_0$ again (Line~\ref{line:incons}). For inadmissible expansions, the state is removed from all $OPEN_i$ for the appropriate resolution $r$ in the loop in Line~\ref{line:open_clear}, and inserted into $CLOSED_r$ in Line~\ref{line:close_inad}. This can happen once per resolution. Thus a state can be expanded at most $N + 1$ times per iteration of \AMRA.\hfill
\end{proof}

\begin{theorem}\label{thm:complete}
    \AMRA is complete with respect to the graph $G_0 = (V_0, E_0)$.
\end{theorem}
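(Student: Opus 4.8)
The plan is to show that whenever a path from $x_s$ to $\mathcal{G}$ exists in $G_0 = (V_0, E_0)$, the call to \textsc{ImprovePath} returns true (Line~\ref{line:solve_inad} or Line~\ref{line:solve_anchor}) rather than exhausting its queues and exiting the \textbf{while} loop (Line~\ref{line:fail}). Since completeness only requires that a solution be found when one exists, and since the first iteration of the outer loop (Line~\ref{line:amra_loop}) already runs \textsc{ImprovePath} with finite weights, it suffices to argue completeness for a single invocation of \textsc{ImprovePath}. The key observation is that the anchor search ($i = 0$) uses the full action space $\mathcal{A}_0 = \bigcup_{r>0}\mathcal{A}_r$ together with the consistent heuristic $h_0$, so in isolation it behaves exactly like a weighted \A search on $G_0$; the real burden is to show that the inadmissible searches and the round-robin schedule (Line~\ref{line:roundrobin}) never prevent the anchor search from doing its job.

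First I would establish termination. By Theorem~\ref{thm:expansions} each state is expanded at most $N+1$ times per iteration, and $V_0$ is finite, so \textsc{ImprovePath} performs only finitely many expansions and must halt — either returning true, or leaving the loop because a queue is empty. It therefore suffices to rule out the failure exit whenever a solution exists. Next I would carry the standard \A frontier invariant through the anchor search: as long as no goal state has been popped, for every path $\pi = (x_s, \ldots, x_g \in \mathcal{G})$ in $G_0$ there is a state of $\pi$ currently residing in $OPEN_0$ (possibly after being reintroduced from $INCONS$ on Line~\ref{line:incons_to_anchor}). The inductive step relies on the fact that every anchor expansion of a state $x$ generates all of its $G_0$-successors via $\mathcal{A}_0$ (Line~\ref{line:succs}) and inserts each into $OPEN_0$ unless it already lies in $CLOSED_0$ (Lines~\ref{line:incons}--\ref{line:succ_open}). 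Hence the anchor frontier always ``touches'' every $x_s$-to-$\mathcal{G}$ path, so $OPEN_0$ remains non-empty and contains a goal-reaching state until a goal is expanded.

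The crux — and the step I expect to be the main obstacle — is ruling out starvation of the anchor search by the inadmissible searches. Because the anchoring test $OPEN_i.\text{min}() \le w_2 \times OPEN_0.\text{min}()$ can repeatedly favour an inadmissible queue, one must argue that the anchor eventually regains control. I would do this by bounding the inadmissible work: each state is expanded at most once per non-anchor resolution, so at most $N\,|V_0|$ inadmissible expansions occur in an iteration. Thus the schedule cannot defer anchor expansions forever; the anchor must eventually pop the frontier state guaranteed by the invariant, progress along $\pi$, and ultimately expand a goal, returning true.

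The delicate bookkeeping, which I would treat carefully in the full proof, is to confirm that the queue removals on Line~\ref{line:open_clear} and the consistency check on Line~\ref{line:incons} never delete a needed frontier state from $OPEN_0$: these operations only touch inadmissible queues or states already closed by the anchor, so they leave the anchor frontier intact. Verifying that the invariant is genuinely preserved across the arbitrary interleaving of anchor and inadmissible expansions is where the argument must be made rigorous, after which completeness with respect to $G_0$ follows.
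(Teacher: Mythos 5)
Your proposal is correct and follows essentially the same route as the paper: completeness rests on the anchor search over the union graph $G_0$ (with $\mathcal{A}_0 = \bigcup_{r>0}\mathcal{A}_r$) exhausting all reachable states, with the inadmissible queues and the operations on Line~\ref{line:open_clear} unable to remove states from $OPEN_0$. The paper's own proof is a terse two-sentence version of this; your frontier invariant, termination bound, and non-starvation argument (the last of which is in fact subsumed by termination, since the failure exit at Line~\ref{line:fail} already presupposes the queues are exhausted) are the rigorous elaboration it leaves implicit.
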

\begin{proof}
    \AMRA can either terminate after finding a solution in Line~\ref{line:solve_inad} or~\ref{line:solve_anchor}, or without a solution after exhausting all $OPEN_i$ and exiting the loop in Line~\ref{line:fail}. Since $\mathcal{A}_0 = \bigcup_{r > 0}\mathcal{A}_r$ and any edge $e \in E_0$ is an action $a \in \mathcal{A}_0$, $V_0 = \bigcup_{r > 0} V_r$. A consequence of this is that any solution at any resolution $r \geq 0$ must exist in $G_0$. Furthermore, if \AMRA exits the loop in Line~\ref{line:fail}, no states in $V_0$ remain to be expanded. Thus, \AMRA terminates in failure \textit{iff} there is no solution in the graph $G_0$.\hfill
\end{proof}

\begin{theorem}\label{thm:suboptimality}
    At the end of each iteration \AMRA returns a solution, if one exists, that is at most $w_1 \times w_2$ suboptimal with respect to the optimal solution in graph $G_0 = (V_0, E_0)$.
\end{theorem}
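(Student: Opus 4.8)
The plan is to adapt the standard bounded-suboptimality argument for weighted and multi-heuristic A* to the multi-resolution, anytime setting. The key observation is that all expansions---anchor and inadmissible---contribute to a single shared $g$-value and a single $bp$ tree over the union space $V_0$, so it suffices to reason about the anchor search's guarantees and then show the inadmissible expansions never corrupt them. I would first establish the fundamental invariant maintained by the consistent anchor heuristic $h_0$: at any point during \textsc{ImprovePath}, for every state $x$ that has not yet been expanded by the anchor search (i.e. $x \notin CLOSED_0$), we have $g(x) \le w_1 \cdot g^*(x)$ once $x$ has been generated along the relevant portion of an optimal path, where $g^*$ denotes the optimal cost-to-come in $G_0$. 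This is the usual \wA/\ARA invariant and follows because $h_0$ is consistent, each \textsc{Expand} relaxes edges correctly (the $g$-update in \textsc{Expand} is a standard Dijkstra-style relaxation independent of which queue triggered it), and the anchor queue is ordered by $g(x)+w_1 h_0(x)$.

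Next I would handle the termination key. When \AMRA returns in Line~\ref{line:solve_inad} or Line~\ref{line:solve_anchor}, the expanded state $x_{\text{goal}}$ came either from $OPEN_0$ or from some $OPEN_i$ that passed the test $OPEN_i.\text{min}() \le w_2 \cdot OPEN_0.\text{min}()$ on Line before \ref{line:expand_inad}. In both cases I can bound the key of the popped state by $w_2 \cdot OPEN_0.\text{min}()$, since the anchor pop trivially satisfies this with equality and every inadmissible pop satisfies it by the guard. Then, using the anchor invariant, I would show $OPEN_0.\text{min}() \le w_1 \cdot c(\pi^*)$: the minimum anchor key is at most the key of whichever state on the optimal path currently sits in $OPEN_0$, and consistency of $h_0$ together with $g(x)\le w_1 g^*(x)$ bounds that key by $w_1 \cdot c(\pi^*)$, where $c(\pi^*)$ is the optimal solution cost in $G_0$. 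Composing these gives that the cost-to-come of the returned goal satisfies $g(x_{\text{goal}}) \le \textsc{Key}(x_{\text{goal}}) \le w_2 \cdot OPEN_0.\text{min}() \le w_1 \cdot w_2 \cdot c(\pi^*)$, which is the desired bound.

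The step I expect to be the main obstacle is verifying that the anchor invariant genuinely survives the multi-resolution \textsc{Expand} and the round-robin interleaving with inadmissible queues. The subtlety is the \textit{shared} $g$-value: an inadmissible expansion at resolution $r$ can lower $g(x')$ for a successor $x'$, and I must argue this only ever helps (the relaxation is monotone) and never causes the anchor search to skip a needed state, since any state with an improved $g$ that was already anchor-expanded is routed into $INCONS$ (Line~\ref{line:incons}) and reinserted into $OPEN_0$ at the start of the next iteration (Line~\ref{line:incons_to_anchor}). I therefore need to confine the suboptimality claim to \emph{within a single iteration}, where $CLOSED_0$ is reset at the start (the $CLOSED_r.\text{clear}()$ loop) and the anchor search behaves like one clean run of weighted A* with the inadmissible queues only accelerating discovery of good paths. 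I would also confirm that the removal of $x$ from inadmissible queues on Line~\ref{line:open_clear} and the per-resolution $CLOSED_r$ bookkeeping from Theorem~\ref{thm:expansions} do not remove a state from $OPEN_0$ prematurely---only anchor expansion places a state in $CLOSED_0$---so the anchor search's completeness over $V_0$ (Theorem~\ref{thm:complete}) and hence the invariant remain intact. Once that isolation is established, the bound follows by the composition above.
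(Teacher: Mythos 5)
Your proposal is correct and follows essentially the same route as the paper's (sketch) proof: the paper also splits on whether termination occurs via the anchor queue---invoking the \wA/\ARA bound $g(x_{\text{goal}}) \leq w_1 \, g^*(x_{\text{goal}})$ for the consistent anchor heuristic---or via an inadmissible queue, where the guard gives $g(x_{\text{goal}}) \leq w_2 \times OPEN_0.\text{min}() \leq w_1 w_2 \, g^*(x_{\text{goal}})$, citing the \MHA analysis. The only difference is that you unpack the invariants (the anchor-queue bound $OPEN_0.\text{min}() \leq w_1 c(\pi^*)$, the shared $g$-value monotonicity, and the $INCONS$ bookkeeping) that the paper delegates to the cited \ARA and \MHA analyses.
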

\begin{proof}
    (Sketch) \AMRA is complete with respect to $G_0$ (from Theorem~\ref{thm:complete}). The anchor search is a \wA search with a consistent heuristic and suboptimality factor $w_1$. Thus if \AMRA terminates via the anchor search in Line~\ref{line:solve_anchor}, $g(x_{\text{goal}}) \leq w_1 \times g^*(x_{\text{goal}})$ (from~\cite{ARAFormal}). If \AMRA terminates via an inadmissible heuristic in Line~\ref{line:solve_inad}, $g(x_{\text{goal}}) \leq w_2 \times OPEN_0.\text{min}() \leq w_2 \times w_1 \times g^*(x_{\text{goal}})$ (from~\cite{AineSNHL16}). Thus any solution returned by \AMRA is at most $w_1 \times w_2$ suboptimal with respect to $G_0$.\hfill
\end{proof}


\section{Experimental Results}\label{sec:exps}

\subsection{Illustrative Example}

\begin{figure}[t]
    \centering
    \includegraphics[width=0.9\columnwidth]{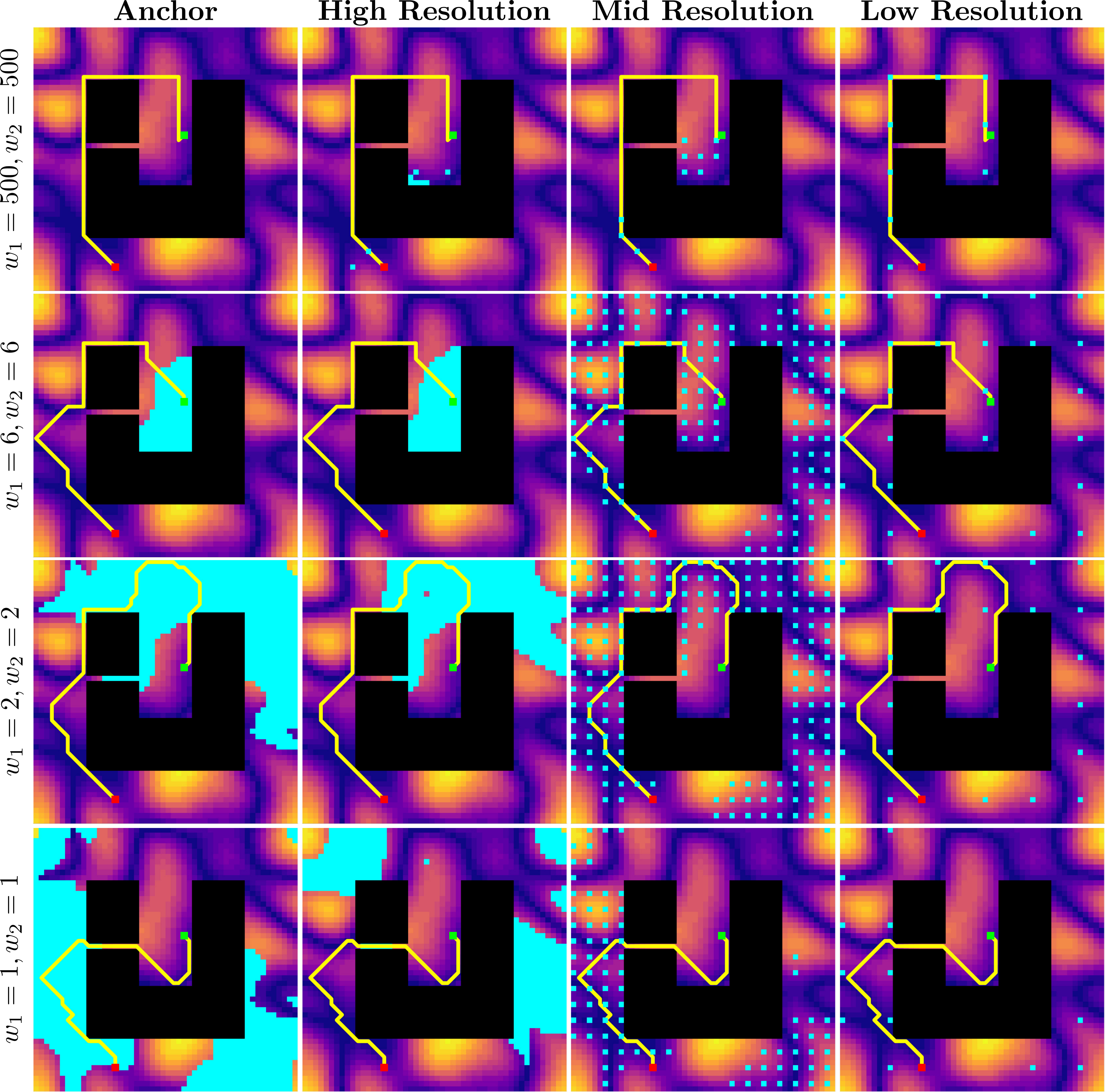}
    \caption{\AMRA execution on a 2D grid map with non-uniform costs. Start state (inside cul-de-sac) is in green, goal state in red, states expanded in cyan, and solution path in yellow. Each row is one iteration within \AMRA, and each column shows expansions from different state space discretisations. Cell costs increase from purple to orange. Best viewed in colour.}
    \label{fig:amra_ex}
\end{figure}

Fig.~\ref{fig:amra_ex} shows a 2D grid navigation example to illustrate the behaviour showed by \AMRA. We run \AMRA on a $50 \times 50$ map with three levels of discretisation: high ($1 \times 1$), mid ($3 \times 3$), and low ($9 \times 9$). Each $1 \times 1$ cell in the map has an assigned cost in the range $[10, 260]$. The robot can execute actions on an 8-connected grid at all resolutions, and the cost of an action is the sum of costs of $1 \times 1$ cells along that action. Only a single Euclidean distance heuristic was used. After finding an initial solution mostly at the low resolution, \AMRA expands more states at the finer resolutions over subsequent iterations to improve solution quality and finally terminates with the optimal solution for $w_1 = 1, w_2 = 1$.

\subsection{2D Grid Navigation}\label{sec:2dexps}

\begin{table*}[t]
\centering
\caption{2D Grid Navigation Results}
\label{tab:2dexps}
\begingroup
\setlength{\tabcolsep}{4pt}
\begin{tabular}{l||*{2}c|*{2}c|*{2}c|*{2}c|*{2}c|*{2}c}
\toprule
& \multicolumn{2}{c}{\textbf{\AMRA}} & \multicolumn{2}{c}{\textbf{\MRA}} & \multicolumn{2}{c}{\textbf{\ARA} (High)} & \multicolumn{2}{c}{\textbf{\ARA} (Mid)} & \multicolumn{2}{c}{\textbf{\ARA} (Low)} & \multicolumn{2}{c}{\textbf{\RRT}} \\
\cmidrule(lr){2-3}\cmidrule(lr){4-5}\cmidrule(lr){6-7}\cmidrule(lr){8-9}\cmidrule(lr){10-11}\cmidrule(lr){12-13}
\textbf{Metrics} & Cauldron (M1) & TheFrozenSea (M2) & M1 & M2 & M1 & M2 & M1 & M2 & M1 & M2 & M1 & M2  \\ \hline\hline
Success $\%$ & 100 & 100 & 100 & 100 & 100 & 100 & 98 & 99 & 20 & 30 & 100 & 100\\ \hline
$T_i (\si{\milli\second})$ & 1.2 $\pm$ 0.98 & 1.08 $\pm$ 1.04 & 1.03$\times$ & 1.01$\times$ & 11.27$\times$ & 13.63$\times$ & 0.73$\mathbf\times$ & 0.49$\times$ & 0.27$\times$ & 0.32$\mathbf\times$ & 158.84$\times$ & 256.85$\times$\\ \hline
$T_f (\si{\milli\second})$ & 280.39 $\pm$ 302.88 & 223.81 $\pm$ 280.36 & 1.4$\times$ & 1.37$\times$ & 1.13$\times$ & 1.46$\times$ & 0.07$\mathbf\times$ & 0.04$\mathbf\times$ & 0.07$\times$ & 0.05$\times$ & 53.88$\times$ & 30.36$\times$\\ \hline
$c_i$ & 1163.06 $\pm$ 574.31 & 953.14 $\pm$ 480.8 & 1$\times$ & 1$\times$ & 1.02$\times$ & 1.01$\times$ & 1$\times$ & 0.98$\times$ & 1.19$\times$ & 1.38$\times$ & 0.74$\mathbf\times$ & 0.81$\mathbf\times$\\ \hline
$c_f$ & 902.12 $\pm$ 403.27 & 754.7 $\pm$ 367.75 & 1$\times$ & 1$\times$ & 1$\times$ & 1$\times$ & 1.06$\times$ & 1.03$\times$ & 1.32$\times$ & 1.61$\times$ & 0.86$\mathbf\times$ & 0.85$\mathbf\times$\\ \hline
$\lvert V_0 \lvert \,(\times 10^4)$ & 11.87 $\pm$ 11.63 & 9.51 $\pm$ 11.44 & 1.57$\times$ & 1.5$\times$ & 2.17$\times$ & 2.66$\times$ & 0.1$\mathbf\times$ & 0.1$\mathbf\times$ & 0.12$\times$ & 0.1$\mathbf\times$ & 6.1$\times$ & 1.71$\times$\\ \hline
\bottomrule
\end{tabular}
\endgroup
\end{table*}

We test the performance of \AMRA for a 2D grid navigation task on two $1024\times1024$ maps from the MovingAI benchmark~\cite{sturtevant2012benchmarks} shown in Fig.~\ref{fig:2dmaps}. The state space was discretised at three levels: high ($1 \times 1$), mid ($7 \times 7$), and low ($21 \times 21$). A four-connected action space was used at each resolution and a single Manhattan distance heuristic was used. For each map, 100 random start and goal states were sampled at the low resolution. In this experiment, we compare the multi-resolution and anytime behaviour of \AMRA against \MRA and also \ARA run at each of the three resolutions denoted as ``\ARA (High)'', ``\ARA (Mid)'' and ``\ARA (Low)''. Since \MRA is not anytime, we ran a succession of \MRA searches with the same schedule of suboptimality weights as \AMRA. Additionally, we compare against an asymptotically optimal sampling-based planner \RRT~\cite{KaramanF11} from OMPL~\cite{sucan2012the-open-motion-planning-library}.

Table~\ref{tab:2dexps} presents the result of these experiments. We report six metrics: success rate, times to initial and final solutions ($T_i$ and $T_f$ respectively, in milliseconds), costs of initial and final solutions ($c_i$ and $c_f$ respectively), and the number of state expansions $\lvert V_0 \lvert$\footnote{For \RRT, $\lvert V_0 \lvert$ is the number of vertices in the final tree.}. All planners were given a $5 \si{\second}$ timeout. We report raw numbers for \AMRA and relative numbers for the other algorithms, averaged over the 100 trials.

\AMRA is faster than the complete search-based baselines (\MRA and \ARA (High)) and expands fewer states, while converging to the optimal solution. The convergence behaviour of these algorithms is shown in Fig.~\ref{fig:converge}. \AMRA is also much faster than \RRT\footnote{The termination criteria for \RRT was computing 10 solutions in a row whose costs were within $10\%$ of each other.}, albeit finding costlier solutions on a discretised grid representation of the environment.

\begin{figure}[t]
    \centering
    \includegraphics[width=0.7\columnwidth]{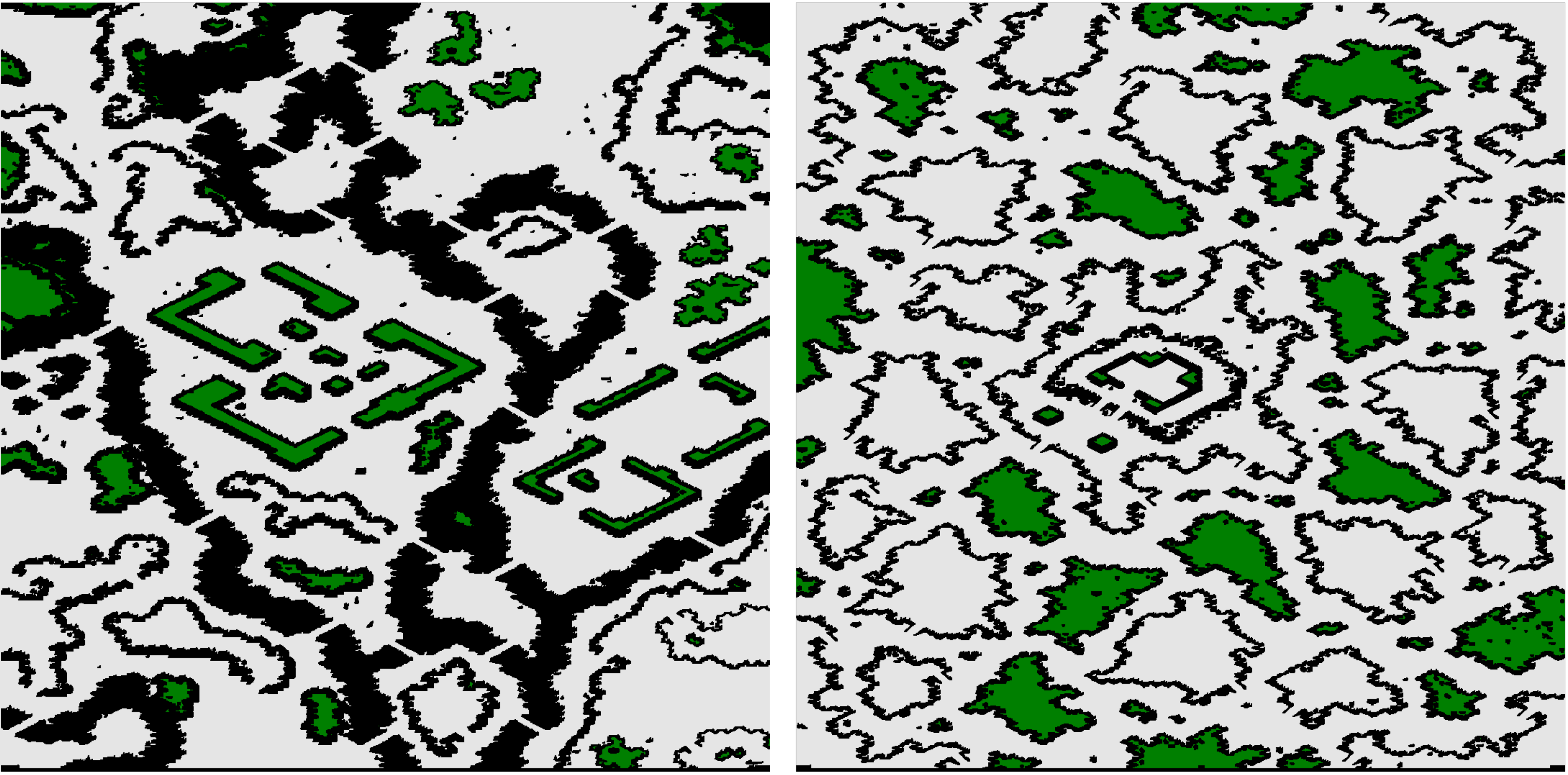}
    \caption{The four Starcraft maps from the MovingAI benchmark used for 2D grid navigation experiments: Cauldron (\textit{left}) and TheFrozenSea (\textit{right}). \textcolor{Green}{Green} and black areas are obstacles.}
    \label{fig:2dmaps}
\end{figure}

\begin{figure}[t]
    \centering
    \includegraphics[width=0.6\columnwidth]{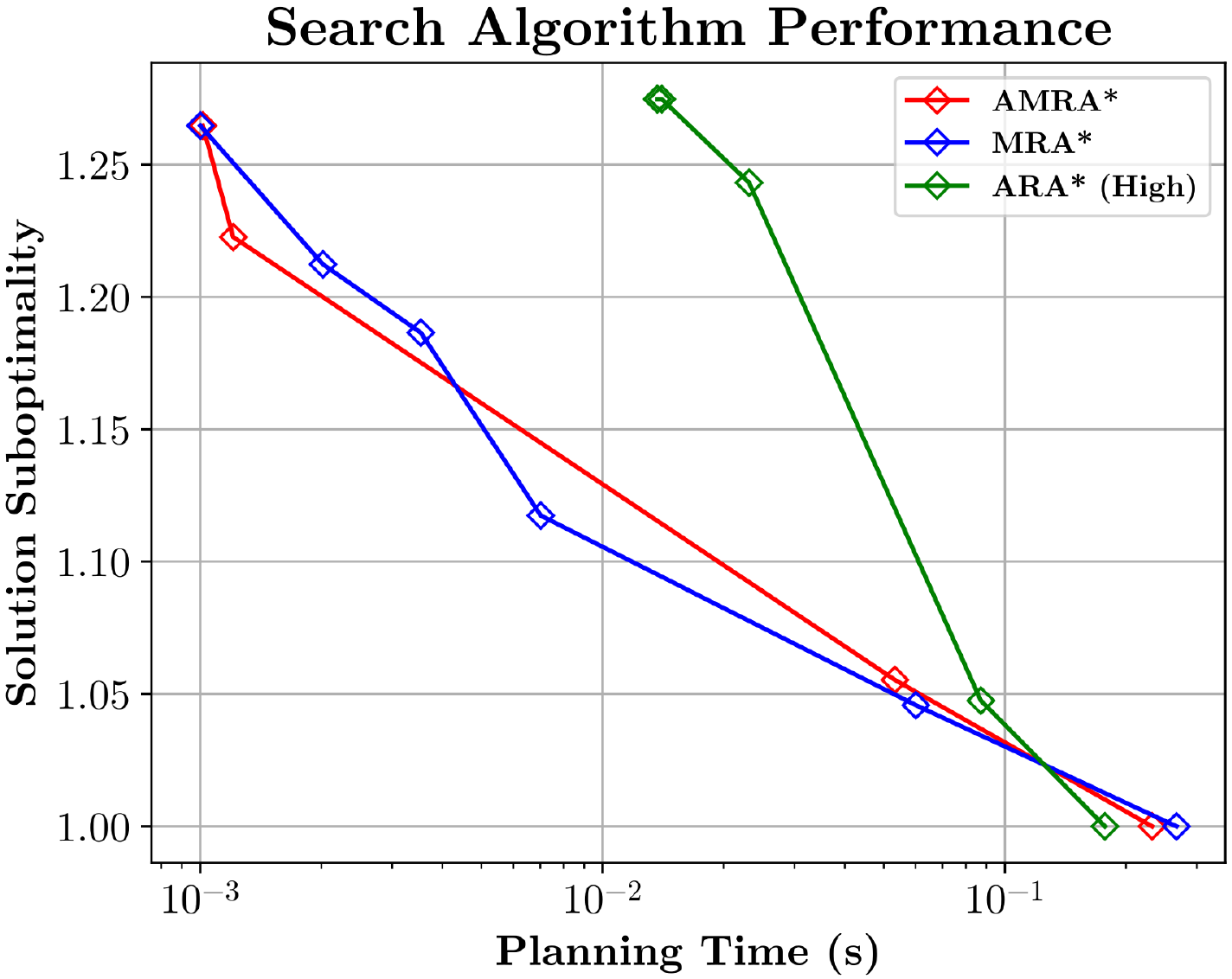}
    \caption{Performance of search algorithms on TheFrozenSea map from Fig.~\ref{fig:2dmaps}. Data was averaged over 100 runs. The x-axis is in log scale. \MRA was run iteratively as described in Sec.~\ref{sec:2dexps}.}
    \label{fig:converge}
\end{figure}

\subsection{UAV Navigation}\label{sec:uavexps}

\begin{table*}[t]
\centering
\caption{2D Grid Navigation Results}
\label{tab:uavexps}
\begingroup
\setlength{\tabcolsep}{6pt}
\begin{tabular}{l||*{2}c|*{2}c|*{2}c|*{2}c|*{2}c}
\toprule
& \multicolumn{2}{c}{\textbf{\AMRA}} & \multicolumn{2}{c}{\textbf{\MRA} (E)} & \multicolumn{2}{c}{\textbf{\MRA} (Dubins)} & \multicolumn{2}{c}{\textbf{\MRA} (Dijkstra)} & \multicolumn{2}{c}{\textbf{\AMHA} (High)}\\
\cmidrule(lr){2-3}\cmidrule(lr){4-5}\cmidrule(lr){6-7}\cmidrule(lr){8-9}\cmidrule(lr){10-11}
\textbf{Metrics} & Boston (M1) & NewYork (M2) & M1 & M2 & M1 & M2 & M1 & M2 & M1 & M2  \\ \hline\hline
$T_i (\si{\second})$ & 0.31 $\pm$ 0.25 & 0.26 $\pm$ 0.23 & 0.69$\times$ & 1.4$\times$ & 1.11$\times$ & 0.33$\times$ & 1.01$\mathbf\times$ & 1.04$\times$ & 0.94$\times$ & 0.94$\mathbf\times$ \\ \hline
$T_f (\si{\second})$ & 10.46 $\pm$ 10.77 & 9.54 $\pm$ 10.65 & 1.9$\times$ & 1.97$\times$ & 3.4$\times$ & 3.46$\times$ & 0.74$\mathbf\times$ & 0.75$\mathbf\times$ & 12.32$\times$ & 19.89$\times$ \\ \hline
$c_i$ & 135.64 $\pm$ 66.15 & 109.48 $\pm$ 49.52 & 1.12$\times$ & 1.06$\times$ & 1.36$\times$ & 1.37$\times$ & 1.04$\times$ & 0.98$\times$ & 1.86$\times$ & 2.07$\times$ \\ \hline
$c_f$ & 105.34 $\pm$ 46.61 & 92.47 $\pm$ 39.46 & 1$\times$ & 1$\times$ & 1$\times$ & 1$\times$ & 1$\times$ & 1$\times$ & 2.23$\times$ & 2.35$\times$ \\ \hline
$\lvert V_0 \lvert \,(\times 10^5)$ & 1.74 $\pm$ 1.9 & 1.56 $\pm$ 1.87 & 1.27$\times$ & 1.4$\times$ & 4.84$\times$ & 4.15$\times$ & 0.82$\mathbf\times$ & 0.83$\mathbf\times$ & 20.59$\times$ & 42.99$\mathbf\times$ \\ \hline
Timeout $\%$ & 15 & 12 & 39 & 29 & 30 & 23 & 6 & 4 & 74 & 82 \\ \hline
\bottomrule
\end{tabular}
\endgroup
\end{table*}

The second set of experiments studies the multi-heuristic capabilities of \AMRA in addition to the multi-resolution and anytime behaviour. We solve kinodynamic motion planning problems for a 4D UAV robot modeled with double integrator dynamics. The state space of the robot is $(x, y, \theta, v)$ - its 2D pose in $SE(2)$ and linear velocity. The motion primitives used for the search algorithms are shown in Fig.~\ref{fig:mprims}. They exist at two resolutions for the 2D position: $3\si{\meter}$ (high) and $9\si{\meter}$ (low). The heading $\theta$ can take 12 discrete values in $[0, 2\pi)$, and the velocity $v$ can be $\{0, 3, 8\} \si{\meter\per\second}$ at the end of a primitive. The cost of an action is its duration, thus we are solving for the least-time path in this experiment.

\begin{figure}[t]
    \centering
    \includegraphics[width=0.55\columnwidth]{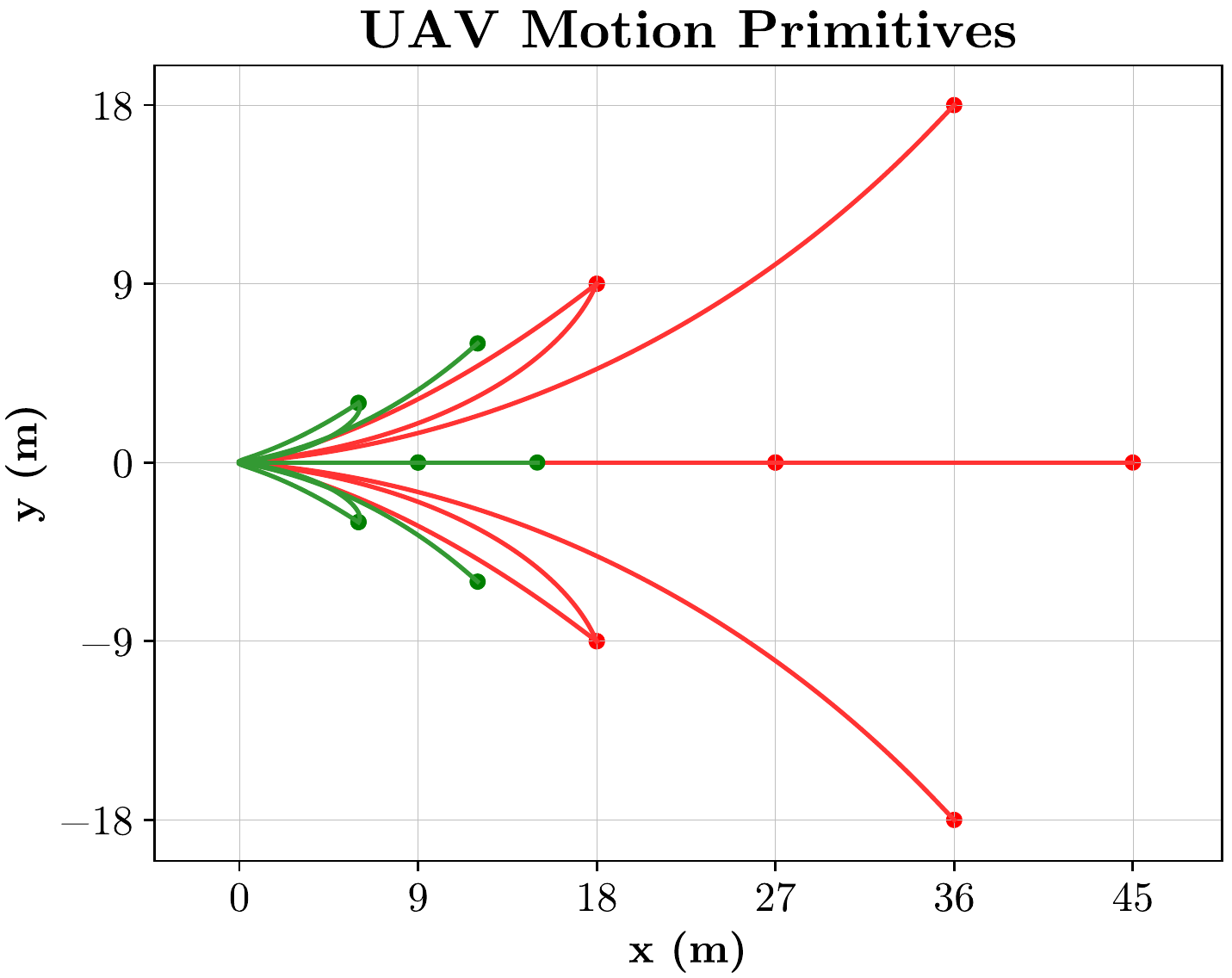}
    \caption{Motion primitives for UAV navigation. High resolution primitives are in \textcolor{Green}{green}, and low resolution primitives are in \textcolor{Red}{red}.}
    \label{fig:mprims}
\end{figure}

We use three inadmissible heuristics for this experiment: Euclidean distance to the goal (always used as the admissible anchor heuristic as well), Dubins path~\cite{Dubins} distance to the goal, and a backwards Dijkstra search from the goal. \AMRA uses all three heuristics at both resolutions. 100 random start and goal states were sampled in two maps shown in Fig.~\ref{fig:uavmaps} at the low resolution, and planners were given a timeout of $30 \si{\second}$. We compare against two search-based algorithms: \MRA and \AMHA. The former is not a multi-heuristic algorithm, thus we compare against instantiations which use different heuristics: ``\MRA (E)'', ``\MRA (Dubins)'', and ``\MRA (Dijkstra)''. We also compare against ``\AMHA (High)'' since that is not a multi-resolution algorithm. ``\AMHA (Low)'' was unable to find any solutions across the $2\times100$ problems.

Table~\ref{tab:uavexps} shows the results of these experiments. As in Section~\ref{sec:2dexps}, we present raw numbers for \AMRA and relative numbers for the other baselines, averaged over 100 trials. Since all these algorithms succeeded in finding an initial solution, we report the timeout percentage (percentage of problems that reached the planning timeout before finding the final solution) in place of success rate. Overall, \AMRA is the most consistent algorithm when compared against the baselines. It finds the optimal solution much faster than ``\MRA (E)'', ``\MRA (Dubins)'', and ``\AMHA (High)'' and with fewer timeouts. In most cases it is also quicker to find the first solution than all \MRA variants. ``\MRA (Dijkstra)'' is the most competitive baseline as it finds the optimal solution quicker, with fewer expansions and fewer timeouts. This comparison shows the effect of the overhead of \AMRA using multiple heuristics and multiple resolutions.

\begin{figure}[t]
    \centering
    \includegraphics[width=0.7\columnwidth]{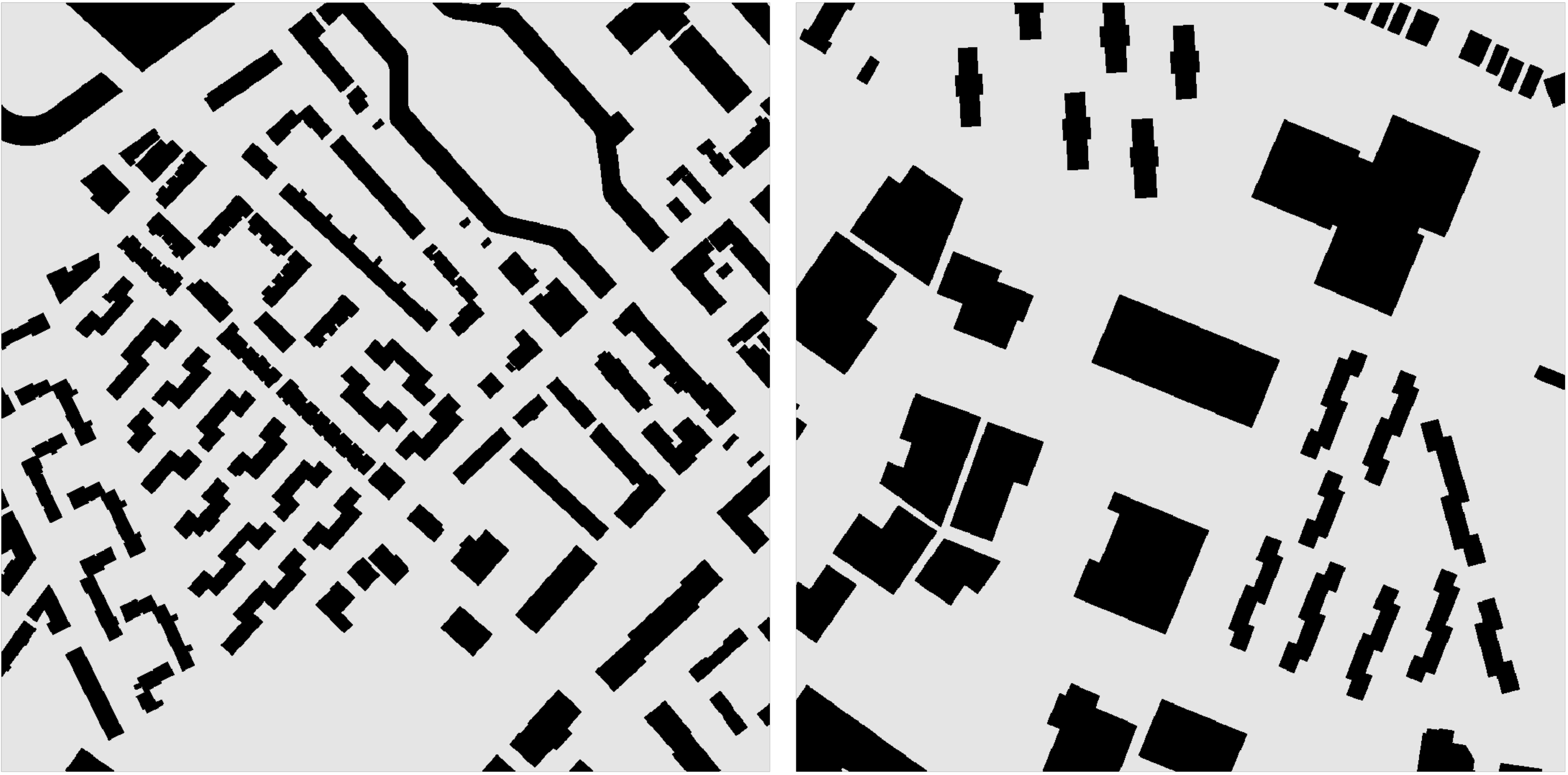}
    \caption{The two city/street maps from the MovingAI benchmark used for UAV navigation experiments: Boston\_0\_1024 (\textit{left}) and NewYork\_0\_1024 \textit{right}. Given the fixed discretisations for $\theta$ and $v$, there are roughly $2.8 \times 10^7$ valid states in these maps.}
    \label{fig:uavmaps}
\end{figure}



\section{Discussion \& Future Work}\label{sec:future}
In this work we present \AMRA an anytime, multi-resolution, multi-heuristic search algorithm that generalises several existing search algorithms into one unified algorithm. It it very flexible for robot motion planning problems that have previously benefited from anytime algorithms, multiple heuristics, and multiple resolutions in separate lines of research. \AMRA exhibits impressive performance on two very different planning domains in 2D grid navigation and 4D kinodynamic UAV planning.

\AMRA at its core utilises multiple action spaces. Plenty of robotic systems are capable of a diverse set of actions that may dynamically become available to the robot given the state it is in. For example, a robot arm might plan in free space with simple motor primitives (independent joint angle changes), but might need to resort to prehensile and non-prehensile interaction actions in the vicinity of clutter. \AMRA opens the door for developing search algorithms that reason about such dynamically evolving action spaces that include both robot-centric and object-centric actions.


\balance
\bibliographystyle{IEEEtran}
\bibliography{references}

\end{document}